\documentclass{article}


\usepackage[final]{exait_2025}




\usepackage[utf8]{inputenc} 
\usepackage[T1]{fontenc}    
\usepackage{hyperref}       
\usepackage{url}            
\usepackage{booktabs}       
\usepackage{amsfonts}       
\usepackage{nicefrac}       
\usepackage{microtype}      
\usepackage{xcolor}         

\usepackage{natbib}
\usepackage{amsmath}
\usepackage{amssymb}
\usepackage{mathtools}
\usepackage{amsthm}
\usepackage{multirow}

\usepackage{algorithm}
\usepackage{algpseudocode}

\usepackage[capitalize,noabbrev]{cleveref}

\theoremstyle{plain}
\newtheorem{theorem}{Theorem}[section]

\theoremstyle{definition}
\newtheorem{definition}[theorem]{Definition}

\newtheorem{assumption}[theorem]{Assumption}
\theoremstyle{remark}

\usepackage{thm-restate}

\makeatletter
\def\thmt@innercounters{equation,theorem}
\makeatother

\usepackage[colorinlistoftodos]{todonotes}
\definecolor{Nikola}{rgb}{0, 0.0, 1.0}
\definecolor{Johannes}{rgb}{0, 0.0, 1.0}

\usepackage{mathrsfs}
\usepackage[mathscr]{euscript}
\DeclareSymbolFont{rsfs}{U}{rsfs}{m}{n}
\DeclareSymbolFontAlphabet{\mathscrsfs}{rsfs}

\newcommand{\C}{{\operatorname{C}}}

\newcommand{\bS}{\mathcal{S}}
\newcommand{\A}{\mathcal{A}}


\usepackage{amsmath,amsfonts,bm}









\def\eqref#1{equation~\ref{#1}}
\def\Eqref#1{Equation~\ref{#1}}








\def\1{\bm{1}}










\DeclareMathAlphabet{\mathsfit}{\encodingdefault}{\sfdefault}{m}{sl}
\SetMathAlphabet{\mathsfit}{bold}{\encodingdefault}{\sfdefault}{bx}{n}













\DeclareMathOperator*{\argmax}{arg\,max}

\usepackage{comment}
\usepackage{multicol}

\usepackage{hyperref}
\usepackage{cleveref}
\usepackage{url}

\usepackage{subcaption}
\usepackage{booktabs}

\usepackage{wrapfig}

\usepackage{adjustbox}

\definecolor{cbs1}{HTML}{00A89D}

\title{Central Path Proximal Policy Optimization} 

%

\author{Nikola Milosevic \\
Max Planck Institute for Human Cognitive and Brain Sciences, Leipzig\\
Center for Scalable Data Analytics and Artificial Intelligence (ScaDS.AI), Dresden/Leipzig\\
\texttt{nmilosevic@cbs.mpg.de} \\
\And
Johannes M\"uller \\
Institut f\"{u}r Mathematik, Technische Universit\"{a}t Berlin, 10623 Berlin, Germany \\
\AND
Nico Scherf \\
Max Planck Institute for Human Cognitive and Brain Sciences, Leipzig\\
Center for Scalable Data Analytics and Artificial Intelligence (ScaDS.AI), Dresden/Leipzig\\
}

\begin{document}

\maketitle

\begin{abstract}
In constrained Markov decision processes, enforcing constraints during training is often thought of as decreasing the final return. 
Recently, it was shown that constraints can be incorporated directly into the policy geometry, yielding an optimization trajectory close to the central path of a barrier method, which does not compromise final return.
Building on this idea, we introduce Central Path Proximal Policy Optimization (C3PO), a simple modification of the PPO loss that produces policy iterates, that stay close to the central path of the constrained optimization problem.
Compared to existing on-policy
methods, C3PO delivers improved performance with tighter constraint enforcement, suggesting that central path-guided updates offer a promising direction for constrained policy optimization.
\end{abstract}

\section{Introduction}

Reinforcement learning (RL) has demonstrated impressive capabilities across a wide range of domains, yet real-world applications increasingly demand more than just reward maximization. 
In many real-world high-stakes environments
agents must also avoid violating domain-specific safety or resource constraints. 
This motivates the study of constrained Markov decision processes (CMDPs), an extension of the standard RL framework that imposes expected cost constraints alongside the goal of reward maximization~\citep{Altman1999ConstrainedMD}. 
By treating feasibility and reward objectives separately, CMDPs provide a principled framework for specifying agent behavior in complex environments.

CMDPs are especially relevant in deep reinforcement learning settings, where the design of reward functions 
that lead to safe behavior is difficult. 
Prior work has emphasized the importance of explicit constraint modeling in reinforcement learning for safe exploration~\citep{ray2019benchmarking} and
complex task specification~\citep{roy2022directbehaviorspecificationconstrained}, where
constraints can be more natural and easier to design, including finetuning LLMs for harmlessness~\citep{dai2023saferlhfsafereinforcement}.


Despite their relatively low sample efficiency, model-free on-policy algorithms continue to play a foundational role in constrained RL. 
They offer conceptual clarity, support rigorous theoretical analysis, and provide strong baselines for studying the balance between performance and constraint satisfaction. 
As the field moves toward more scalable and sample-efficient approaches, insights developed in the on-policy setting remain central to both algorithm design and our broader understanding of safe learning, such as the policy improvement guarantees and constraint violation bounds introduced by \cite{achiam2017constrained}.

In this context, there is a growing need for simple, scalable, and effective algorithms for solving CMDPs, ideally with properties similar to widely used algorithms such as proximal policy optimization (PPO;~\cite{schulman2017proximalpolicyoptimizationalgorithms}). PPO's robustness, ease of implementation, and scalability have made it the method of choice in many deep RL and RLHF pipelines~\citep{ouyang2022training}. We aim to extend these strengths to the constrained setting by developing an algorithm that shares PPO's practical benefits while enforcing constraints in a principled CMDP framework. Specifically, we seek to achieve high final reward while approximately satisfying constraints, at least at convergence.

To frame this problem, we distinguish between two commonly conflated settings in constrained RL: (i) \emph{safe exploration}, where constraints must be satisfied throughout training, and (ii) \emph{safe convergence}, where only the final policy is required to satisfy the constraints. Much of the literature has focused on the former, motivated by safety-critical applications in the real world. 
The dominant approach in this setting is model-based safe RL, which can provide strong safety and stability guarantees~\citep{berkenkamp2017safeModelBased, as2025actsafe}.
In contrast, safe convergence reflects settings like simulation-based training or alignment finetuning~\citep{dai2023saferlhfsafereinforcement}.
Typically, ensuring safety \emph{during} training is considered to decrease the final performance achieved by an algorithm. 
We show the contrary and present an algorithm that exhibits strict feasibility during training as well as reliable feasibility and high return at convergence. 
\begin{wrapfigure}{rt}{0.5\textwidth}
  \begin{center}
    \includegraphics[width=0.48\textwidth]{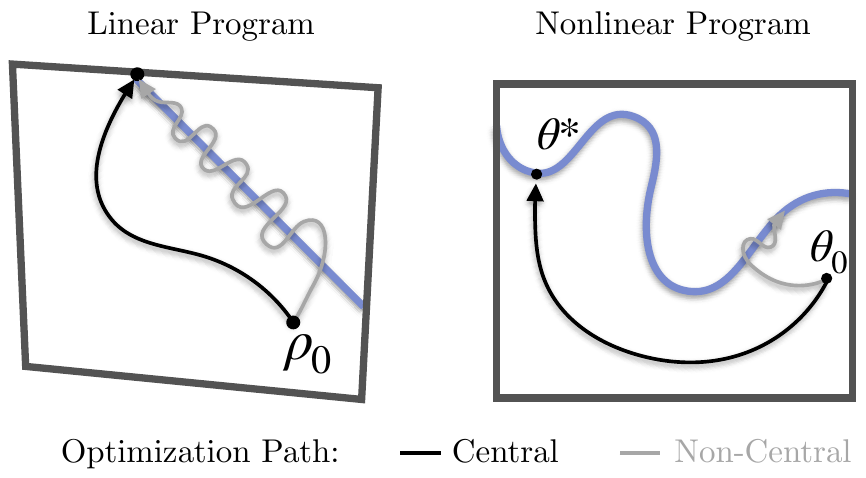}
  \end{center}
  \caption{Pictorial visualization of the central path argument from the main text. While a wide range of methods technically converge to an optimal feasible solution in the linear programming formulation of finite CMDPs (left), in the function approximation setting (right), approaching the constraint surface too early may result in higher sensitivity to local optima.}
  \label{fig:c3po-idea}
  \vspace{-13pt}
\end{wrapfigure}


In nonlinear CMDPs, the constraint surface is typically curved and nonconvex in policy space. 
Converging prematurely or oscillating near the constraint boundary during training
can lead to unreliable constraint satisfaction at convergence. 
Furthermore, it can lead the iterates to local optima that satisfy the constraints but fail to achieve high reward, see Figure~\ref{fig:c3po-idea}. 
Penalty and barrier methods address this problem by maintaining a feasible trajectory toward the constraint surface, yielding feasible solutions more reliably. 
However, barrier methods introduce bias~\citep{muller2024essentially}, meaning the optimization problem obtained by adding a barrier penalty does not have the same solution set as the original problem, which can lead to degraded reward in policy optimization~\citep{anonymous2025embedding}.
Barrier methods either require careful tuning or an interior point approach~\citep{liu2020ipo} to avoid harming reward performance. 

The recently proposed C-TRPO~\citep{anonymous2025embedding} addresses these challenges by combining the strengths of trust-region and barrier methods by deriving a barrier-inspired trust-region formulation using strictly feasible trust regions. 
This results in an algorithm that acts like a barrier method with an adaptively receding barrier, introducing no regularization bias as a result. 
This is achieved by producing policies, which are close to the regularization path obtained by altering the regularization strength, which is commonly known as the \emph{central path}~\citep{boyd2004convex}.

C-TRPO's scalability is limited due to computational overhead 
introduced by the TRPO-inspired update, and the update is defined only in the feasible set.
%
To address the need for a simple and scalable CMDP solver, we propose a proximal version of C-TRPO. 
It also 
follows the central path, and consequently we call it \emph{Central Path Proximal Policy Optimization (C3PO)}. 
C3PO is a minibatch-based method that approximates the C-TRPO update using an exact penalty formulation, combining the simplicity and efficiency of PPO-style updates with the feasible geometry of central path methods. At its core, C3PO leverages the central path property of natural policy gradients to gradually guide the policy toward the constraint surface without inducing oscillations or premature convergence. 

\section{Background}\label{sec:background}
We consider the infinite-horizon discounted constrained Markov decision process (CMDP) and refer the reader to \cite{Altman1999ConstrainedMD} for a general treatment. 
The CMDP is given by the tuple $\mathcal{M}\cup\mathcal{C}$, consisting of a finite MDP $\mathcal{M}$ and a set of constraints $\mathcal{C}$. 
The finite MDP $\mathcal{M}=\{\bS, \A, P, r, \mu, \gamma\}$ is defined by a finite state-space $\bS$, a finite action-space $\A$, a transition kernel $P\colon \bS \times \A \rightarrow \Delta_\bS$, an extrinsic reward function $r\colon \bS \times \A\rightarrow \mathbb{R}$, an initial state distribution $\mu \in \Delta_\bS$, and a discount factor $\gamma\in [0, 1)$.
The space $\Delta_\bS$ is the set of categorical distributions over $\bS$. Further, $\mathcal{C}=\{(c_i,b_i)\}_{i=1}^m$ is a set of $m$ constraints, where $c_i\colon\bS\times\A\to\mathbb{R}$ are the cost functions and $d_i \in \mathbb{R}$ are the cost thresholds.
An agent interacts with the CMDP by selecting a policy $\pi\in\Pi$ and collecting trajectories $\tau=(s_0,a_0,...s_T)$. Let $R(\tau) = \sum_{t=0}^{\infty} \gamma^t r(s_t, a_t)$. Given $\pi$, the value function, 
action-value function,
and advantage function
associated with the reward function $r$ are defined respectively as
\begin{equation*}
    V_r^\pi(s) \coloneqq (1-\gamma)\,\mathbb{E}_{\tau\sim\pi} \left[ R(\tau) \Big|  s_0 = s \right],    \quad
    Q_r^\pi(s, a) \coloneqq (1-\gamma)\,\mathbb{E}_{\tau\sim\pi} \left[ R(\tau) \Big| s_0 = s, a_0 = a \right],
\end{equation*}
and
\begin{equation*}
A_r^\pi(s,a) \coloneqq Q_r^\pi(s, a) - V_r^\pi(s).
\end{equation*}
The expectations are taken over trajectories of the Markov process, meaning with respect to the initial distribution $s_0\sim\mu$, the policy $a_{t}\sim\pi(\cdot|s_t)$ and the state transition $s_{t+1} \sim P(\cdot|s_t, a_t)$. $V_{c_i}^\pi(s)$, $Q_{c_i}^\pi(s,a)$ and $A_{c_i}^\pi(s,a)$ are defined analogously for the $i$-th cost $c_i$ instead of $r$.

Constrained reinforcement learning addresses the optimization problem
\begin{equation}\label{eq:CMDP}
    \text{maximize}_{\pi \in \Pi} \; R(\pi) \quad \text{subject to} \quad C_i(\pi) \leq d_i 
\end{equation}
for all $i=1, \dots, m$,
where $R(\pi)$ is the expected value under the initial state distribution
$R(\pi) \coloneqq  \mathbb{E}_{s \sim \mu}[V_r^\pi(s)]$ and $C_i(\pi) \coloneqq  \mathbb{E}_{s \sim \mu}[V_{c_i}^\pi(s)]$.

Every stationary policy $\pi$ induces the occupancy measures $
\rho_\pi(s)\coloneqq (1-\gamma)\sum_{t=0}^\infty \gamma^t \mathbb{P}_\pi (s_t = s)$, and $\rho_\pi(s,a)\coloneqq\rho_\pi(s)\pi(a|s)$
which indicate the relative frequencies of visiting a state(-action) pair, discounted by how far the event lies in the future. 
The classical, linear programming (LP) approach to solving finite CMDPs \citep{Altman1999ConstrainedMD}, reformulates problem \ref{eq:CMDP} as
\begin{equation}\label{eq:CMDP-lin}
    \text{maximize}_{\rho \in \mathcal{K}} \; \sum_{s,a} \rho(s,a) r(s,a) \quad \text{subject to} \quad \sum_{s,a} \rho(s,a) c_i(s,a) \leq d_i 
\end{equation}
which can be solved using LP solution methods to obtain an optimal occupancy measure $\rho^*$. Here, $\mathcal{K}$ is a set of linear constraints that $\rho_\pi$ must satisfy~\citep{kallenberg1994survey, mei2020escaping}, sometimes referred to as the Bellman flow equations. Finally, an optimal policy can be extracted by conditioning $\pi^*(a|s)=\rho^*(s,a)/\sum_{a'}\rho^*(s,a')$.

In the function approximation setting, approach \ref{eq:CMDP-lin} is not applicable, which has prompted a large body of research in on-policy policy optimization methods. However, it can be leveraged to derive general constrained RL algorithms~\citep{anonymous2025embedding}. In the analysis of on-policy methods (including for standard MDPs) the \emph{policy advantage} plays an important role. In CMDPs, the policy advantages are defined as
\begin{equation}
    \mathbb{A}_{r}^{\pi_k}(\pi)=\sum_{s,a}\rho_{\pi_k}(s)\pi(a|s)A_{r}^{\pi_k}(s,a) \quad \text{and} \quad \mathbb{A}_{c}^{\pi_k}(\pi)=\sum_{s,a}\rho_{\pi_k}(s)\pi(a|s)A_{c}^{\pi_k}(s,a).
\end{equation}
They play an important role in policy optimization, as they approximate the performance difference between two nearby policies with respect to the reward $\mathbb{A}_{r}^{\pi_k}(\pi) \approx R(\pi) - R(\pi_k)$ if $\pi_k \approx \pi$ and analogously for the cost function.

\subsection{Policy Optimization Methods for Constrained Reinforcement Learning}

In the following, we review relevant prior constrained policy optimization methods, thereby focusing on a single constraint to reduce notational clutter. However, all mentioned methods are trivial to extend to multiple constraints.

\paragraph{Constrained Policy Optimization (CPO)}
Constrained policy optimization (CPO) is a modification of trust region policy optimization (TRPO; \cite{schulman2017trust}), where the classic trust region is intersected with the set of safe policies~\citep{achiam2017constrained}. 
At each iteration $k$, the policy of the next iteration $\pi_{k+1}$ is obtained through the solution of 
\begin{equation}\label{eq:cpo}
\max_{\pi \in \Pi}\ \mathbb{A}_{r}^{\pi_k}(\pi)
            \ \textrm{ s.t. }\ \bar D_\textup{KL}(\pi,\pi_k) \leq \delta \ \textrm{ and }\ C(\pi_k) + \mathbb A^{\pi_k}_c(\pi) \le d.
\end{equation}
where $\bar D_\textup{KL}(\pi,\pi_k) = \sum_{s,a}\rho_{\pi_k}(s)D_{\textup{KL}}[\pi(\cdot|s)|\pi_k(\cdot|s)]$ and $C(\pi_k) + \mathbb A_c^{\pi_k}(\pi)$ is an estimate for $C(\pi)$, see~\cite{kakade2002Approximately, schulman2017trust, achiam2017constrained}. 

\paragraph{Penalized Proximal Policy Optimization (P3O/P2BPO)}
Solving the constrained optimization problem \eqref{eq:cpo} is difficult to scale up to more challenging tasks and larger model sizes, as it relies on the arguably sample inefficient TRPO update.
To circumvent this~\cite{zhang2022penalizedproximalpolicyoptimization} proposed a Constrained RL algorithm derived from the relaxed penalized problem 
\begin{equation}\label{eq:p3o-penalty}
\max_{\pi \in \Pi}\ \mathbb{A}_{r}^{\pi_k}(\pi) - \lambda 
            \max\{0,  C(\pi_k) + \mathbb A_c^{\pi_k}(\pi) - d \},
            \ \textrm{ s.t. }\ \bar D_\textup{KL}(\pi,\pi_k) \leq \delta.
\end{equation}
The appeal of this reformulation is that one can obtain an unconstrained problem that gives the same solution set for $\lambda$ chosen large enough~\citep{zhang2022penalizedproximalpolicyoptimization} and by employing a PPO-like loss. A similar approach was taken by \cite{dey2024p2bpo}, where $\max\{0, \cdot\}$ was replaced with a softplus.

\paragraph{Constrained Trust Region Policy Optimization (C-TRPO)}
Where~\eqref{eq:cpo} incorporates constraints by intersecting the trust region with the set of safe policies, an alternative approach was taken by~\cite{anonymous2025embedding} where the geometry was modified such that the resulting trust region automatically consists of safe policies.
To this end, C-TRPO proceeds as TRPO but with the usual divergence augmented by a barrier term, meaning
\begin{equation}\label{eq:ctrpo}
\max_{\pi \in \Pi}\ \mathbb{A}_{r}^{\pi_k}(\pi)
            \ \textrm{ s.t. }\ \bar D_\textup{KL}(\pi,\pi_k) + \beta D_\textup{B}(\pi,\pi_k) \leq \delta.
\end{equation}
where we'll refer to
\begin{equation}
    D_\textup{B}(\pi,\pi_k) = \frac{b-\mathbb{A}_{c}^{\pi_k}(\pi)}{b} - \log\left(\frac{b-\mathbb{A}_{c}^{\pi_k}(\pi)}{b}\right) - 1, \textrm{ for } b>0 \textrm{, else } \infty
\end{equation}
as the \emph{barrier divergence}, 
$\beta$ is a positive safety parameter, and $b=d-C(\pi_k)$ is the \emph{cost budget}.

This update is justified by the general theory of Bregman divergences and the theory of convex programs. It has desirable theoretical properties and results in state-of-the-art performance compared to other on-policy CMDP algorithms. 
We refer the reader to \cite{anonymous2025embedding} and Appendix \ref{app:background} for detailed discussions.

\paragraph{Other methods} 
So far we have focused on model-free, direct policy optimization methods, specifically trust-region and PPO-penalty based ones. 
However, it is important to note that model-based approaches, e.g. \cite{berkenkamp2017safeModelBased, as2025actsafe}, 
are also attractive due to their stability and safety guarantees,
but require learning a model, which is not always feasible.
Lagrangian methods are a widely adopted approach, where the CMDP is formulated as a primal-dual optimization problem. Optimizing the dual variable with stochastic gradient descent is a popular baseline~\citep{achiam2017constrained, ray2019benchmarking, chow2019lyapunovbasedsafepolicyoptimization, stooke2020responsivesafetyreinforcementlearning}. However, a naively optimized dual variable may cause oscillations and overshoot, and analyzing more nuanced update strategies is a subject of current research interest~\citep{sohrabi2024picontrollersupdatinglagrange}.
More recently, log-barrier approaches have attracted considerable research interest \citep{usmanova2024log, zhang2024constrained, dey2024p2bpo} due to their algorithmic simplicity through the fixed penalty, but also due to recent rigorous treatments, see e.g. \cite{ni2024safe}.
However, working with an explicit penalty
produces suboptimal policies w.r.t the original constrained MDP. Fixed penalties introduce an additional error, which has to be controlled, see for example~\cite{geist2019theory, muller2024essentially} for theory, and \cite{liu2020ipo} for a practical example of regularization bias.

\clearpage\section{Central Path Proximal Policy Optimization}
C-TRPO has desirable properties but the practical algorithm 1) scales poorly and is sample-inefficient due to its reliance on the TRPO algorithm and 2) relies on a recovery mechanism (reward-free cost minimization + hysteresis), since the update is not defined if $\pi_k$ is outside the feasible set.
\begin{figure}
    \centering
    \includegraphics[width=1\linewidth]{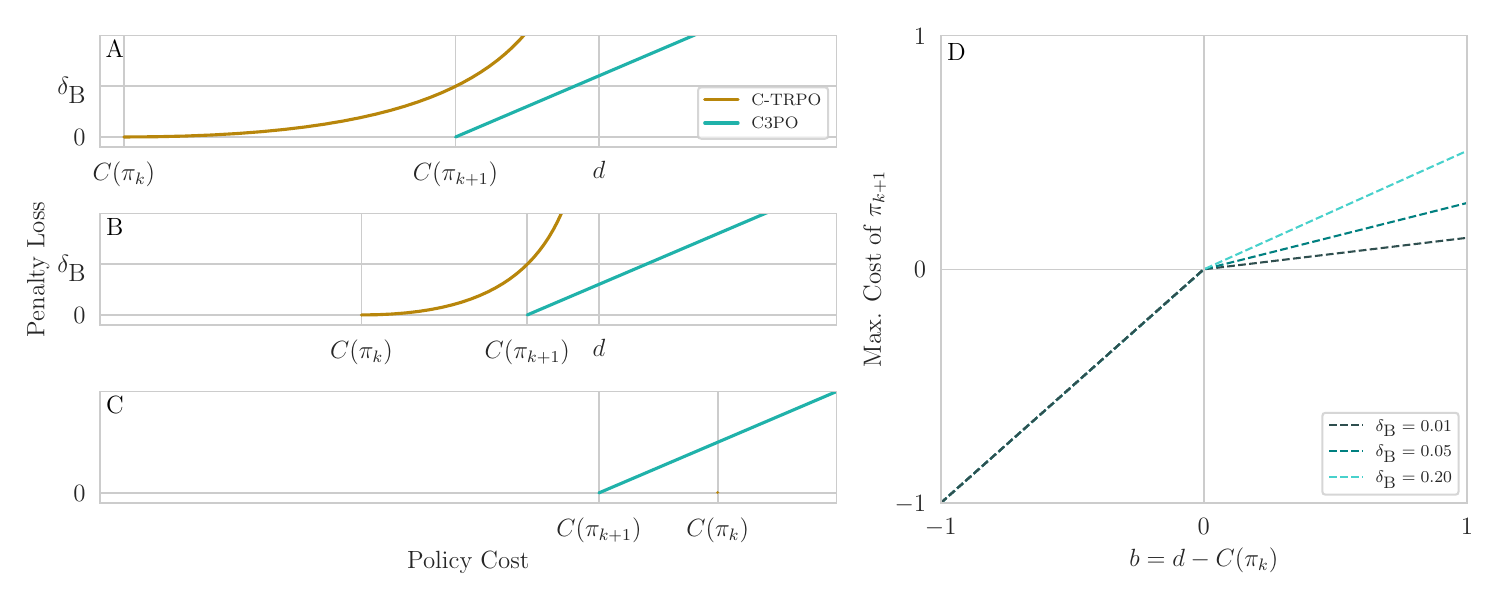}
    \caption{The working principle behind C3PO's exact penalty approach: As the iterate moves closer towards the constraint (A-C), C3PO's ReLU-penalty pulls away at a slower rate $0<w<1$, mimicking C-TRPO's barrier divergence. This rate is defined as a function of $\delta_\textup{B}$ (D), see main text. 
    While C-TRPO's barrier penalty is undefined if either $d \leq C(\pi_k)$ or $d \leq C(\pi_{k+1})$, C3PO's ReLU-penalty is defined everywhere.}
    \label{fig:moving-barrier}
\end{figure}

We propose a first-order approximation of C-TRPO that approximates its central path by solving surrogate optimization problems with the same solution set as C-TRPO's update on every iteration. 
In addition, by employing an exact penalty approach, it allows unsafe policies during training, also enabling less strict exploration strategies within the safe convergence regime.

\paragraph{C3PO Update}
Let us consider a slight modification of C-TRPO's update, which is constrained with the KL and Barrier constraints separately, since they can be approximated using different methods which result in different precisions, i.e. we consider
\begin{equation}\label{eq:max-div}
\max_{\pi \in \Pi}\ \mathbb{A}_{r}^{\pi_k}(\pi)
            \ \textrm{ s.t. }\ D_\textup{B}(\pi,\pi_k) \leq \delta_\textup{B} \ \textrm{ and } \ \bar D_\textup{KL}(\pi,\pi_k) \leq \delta_\textup{KL}.
\end{equation}
Note that this is a subtly different problem than that posed by C-TRPO, but $\delta_{\textup{KL}}$ and $\delta_\textup{B}$ can always be chosen to include the feasible set entirely in C-TRPO's feasible set for a given $\delta$. 
Instead of solving this constrained problem directly, we consider the penalized problem given by 
\begin{equation}\label{eq:exact-penalty-formulation}
\max_{\pi \in \Pi}\ \mathbb{A}_{r}^{\pi_k}(\pi) - \kappa 
            \max\{0, D_\textup{B}(\pi,\pi_k) - \delta_{\textup{B}}\}
            \ \textrm{ s.t. }\ \bar D_\textup{KL}(\pi,\pi_k) \leq \delta_\textup{KL}.
\end{equation}

\begin{theorem}[Exactness]
Let $\lambda$ be the Lagrange multiplier vector for the optimizer of \Eqref{eq:max-div}. 
Then for $\kappa\ge \lvert \lambda \rvert$ the solution sets of problem \Eqref{eq:max-div} and problem \Eqref{eq:exact-penalty-formulation} agree. 
\end{theorem}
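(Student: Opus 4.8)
The plan is to read the claim as the classical $\ell_1$ exact penalty theorem applied to a \emph{convex} reformulation of \eqref{eq:max-div}, so that almost all of the work is verifying the standard hypotheses. Write $f(\pi)\coloneqq\mathbb{A}_r^{\pi_k}(\pi)$, let $g(\pi)\coloneqq D_\textup{B}(\pi,\pi_k)-\delta_\textup{B}$ be the single inequality that gets moved into the objective, and let $\Omega\coloneqq\{\pi\in\Pi:\bar D_\textup{KL}(\pi,\pi_k)\le\delta_\textup{KL}\}$ be the constraint kept as is. First I would record that \eqref{eq:max-div} is a convex program over $\Omega$: $f$ is linear in $\pi$; $\Omega$ is convex, since $\Pi$ is a product of simplices and $\bar D_\textup{KL}(\cdot,\pi_k)$ is convex; and $g$ is convex because $D_\textup{B}(\pi,\pi_k)=\psi\big(\mathbb{A}_c^{\pi_k}(\pi)\big)$ with $\mathbb{A}_c^{\pi_k}$ affine in $\pi$ and $\psi(t)=\tfrac{b-t}{b}-\log\tfrac{b-t}{b}-1$ convex on $\{t<b\}$ (it is $u\mapsto u-\log u-1$ precomposed with an affine map) and $+\infty$ otherwise. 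Next I would supply a constraint qualification: provided $b=d-C(\pi_k)>0$, so that the problems are well posed at all, the point $\pi_k$ lies in $\Omega$ and satisfies $g(\pi_k)=-\delta_\textup{B}<0$ since $\mathbb{A}_c^{\pi_k}(\pi_k)=0$; hence Slater's condition holds for the dualized constraint. Convex duality then yields a multiplier $\lambda\ge 0$ with $\lambda\,g(\pi^\star)=0$ and $\pi^\star\in\argmax_{\pi\in\Omega}\big(f(\pi)-\lambda g(\pi)\big)$ for every optimizer $\pi^\star$ of \eqref{eq:max-div}; this is the multiplier in the statement, with $\lvert\lambda\rvert=\lambda$ in the one-constraint case and $\lVert\lambda\rVert_\infty$ over the cost constraints in general.

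The forward inclusion is then a short chain of inequalities, valid for all $\pi\in\Omega$ and all $\kappa\ge\lambda$:
\begin{multline*}
f(\pi)-\kappa\max\{0,g(\pi)\} \le f(\pi)-\lambda\max\{0,g(\pi)\} \\ \le f(\pi)-\lambda g(\pi) \le f(\pi^\star)-\lambda g(\pi^\star) = f(\pi^\star),
\end{multline*}
using, in order, $\max\{0,g(\pi)\}\ge 0$ with $\kappa\ge\lambda$; $\max\{0,g(\pi)\}\ge g(\pi)$ with $\lambda\ge 0$; optimality of $\pi^\star$ for the partial Lagrangian over $\Omega$; and complementary slackness. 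Since $g(\pi^\star)\le 0$ the chain is tight at $\pi=\pi^\star$, so $\pi^\star$ also maximizes the objective of \eqref{eq:exact-penalty-formulation}, the two problems share the optimal value $f(\pi^\star)$, and every solution of \eqref{eq:max-div} solves \eqref{eq:exact-penalty-formulation}. For the converse I would take any solution $\hat\pi\in\Omega$ of \eqref{eq:exact-penalty-formulation}; equality of optimal values gives $f(\hat\pi)-\kappa\max\{0,g(\hat\pi)\}=f(\pi^\star)$, which forces every inequality in the chain to be an equality at $\hat\pi$. The first of them yields $(\kappa-\lambda)\max\{0,g(\hat\pi)\}=0$, so when $\kappa>\lvert\lambda\rvert$ we conclude $g(\hat\pi)\le 0$; then $\hat\pi$ is feasible for \eqref{eq:max-div} and, having $f(\hat\pi)=f(\pi^\star)$, solves it.

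I expect the boundary case $\kappa=\lvert\lambda\rvert$ to be the real obstacle, and in fact the converse inclusion can fail there without a non-degeneracy hypothesis (a penalty maximizer may sit strictly inside the violated region when the dualized constraint is active and the threshold is attained), so the clean statement is for $\kappa>\lvert\lambda\rvert$, i.e. ``$\kappa$ sufficiently large''; I would either state it that way or add the needed genericity assumption. Under that reading, the remaining content is the textbook exact-penalty argument specialized to a convex program with one inequality dualized and one kept as a hard constraint: the only genuinely problem-specific ingredients are the convexity of $D_\textup{B}(\cdot,\pi_k)$ (the key computation above) and the Slater point provided by $\pi_k$ itself, after which one may simply invoke the standard exact-penalty theorem. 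If instead one wants to keep $\kappa=\lvert\lambda\rvert$, the converse would need a separate convexity argument: a penalty-optimal $\hat\pi$ with $g(\hat\pi)>0$ together with $\pi^\star$ both maximize $f-\lambda g$ over the convex set $\Omega$, hence so does the segment between them, and pushing slightly from $\pi^\star$ toward $\hat\pi$ contradicts the linearity of $f$ when $g$ is inactive at $\pi^\star$, with the active subcase handled by the optimal primal–dual pair.
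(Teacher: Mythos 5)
Your proof is correct and follows essentially the same route as the paper: the paper reduces the theorem to a general exact-penalty result for convex programs in its appendix, whose proof is precisely your chain of Lagrangian inequalities combined with complementary slackness (your extra work verifying convexity of $D_\textup{B}(\cdot,\pi_k)$ and the Slater point $\pi_k$ is what the paper compresses into ``the problem is concave--convex''). Your remark about the boundary case is also on point --- the paper's appendix theorem in fact requires the strict inequality $\kappa>\lambda^\star$ while the main-text statement writes $\kappa\ge\lvert\lambda\rvert$, so you have correctly identified a small inconsistency in the paper rather than a gap in your own argument.
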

\begin{proof}
Note that the problem \Eqref{eq:max-div} is concave-convex in $\pi$. 
Hence, this is a special case of the general exactness result \Cref{app:thm:exactness}. 
\end{proof}

\paragraph{C3PO Algorithm}
The update \Eqref{eq:exact-penalty-formulation} is still undefined outside the feasible set of the barrier divergence constraint.
%
Since we use the barrier divergence only to define the feasible solution set of the update, we can replace it with another function, as long as it defines the same feasible set. 
More precisely, this can be achieved with an equivalent linear constraint that is zero where $D_\textup{B}(\pi,\pi_k) = \delta_\textup{B}$ for positive cost advantages.
The C3PO algorithm approximates update \ref{eq:exact-penalty-formulation} as
\begin{equation}\label{eq:exact-penalty-reformulation}
\max_{\pi \in \Pi}\ \mathbb{A}_{r}^{\pi_k}(\pi) - \kappa 
            \max\{0, \mathbb{A}_{c}^{\pi_k}(\pi) - \min\{b, w\cdot b\}\}
            \ \textrm{ s.t. }\ \bar D_\textup{KL}(\pi,\pi_k) \le \delta_{\textup{KL}}.
\end{equation}
where $0<w<1$.

\begin{restatable}[Positive Exactness]{proposition}{barrierVSlin}
\label{prop:barrier-vs-lin}
For $0\leq\mathbb{A}_c^{\pi_k}(\pi)<d-C(\pi_k)$, there exist $w$ and $\delta_\textup{B}$ for which the solution sets of problems \ref{eq:max-div}, \ref{eq:exact-penalty-formulation} and \ref{eq:exact-penalty-reformulation} agree.
\end{restatable}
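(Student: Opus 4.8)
The plan is to reduce the whole statement to a one–dimensional comparison of the barrier divergence with a linear function of the cost advantage, and then to invoke the $\ell_1$–exact–penalty result twice. Concretely, I would first write $x\coloneqq\mathbb{A}_c^{\pi_k}(\pi)$ and $g(t)\coloneqq t-\log t-1$, so that $D_\textup{B}(\pi,\pi_k)=g\!\big(\tfrac{b-x}{b}\big)$ for $b>0$, and then use the elementary facts that $g$ is convex with $g(1)=0$ and strictly decreasing on $(0,1]$: composed with the affine map $x\mapsto 1-x/b$, this makes $x\mapsto D_\textup{B}(\pi,\pi_k)$ a continuous strictly increasing bijection from $[0,b)$ onto $[0,\infty)$. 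Along the way I would also record that \eqref{eq:max-div} and the linearised problem below are concave programmes ($\mathbb{A}_r^{\pi_k}$ linear, $\mathbb{A}_c^{\pi_k}$ and $\bar D_\textup{KL}$ convex, $D_\textup{B}$ convex as a convex function of an affine function of $\pi$), and that $\pi_k$ is strictly feasible for all constraints ($D_\textup{B}=\bar D_\textup{KL}=0$, $\mathbb{A}_c^{\pi_k}(\pi_k)=0$), so Slater's condition holds and the multiplier $\lambda$ of the statement is well defined.

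\textbf{Identifying the feasible sets.} Let $S\coloneqq\{\pi:0\le\mathbb{A}_c^{\pi_k}(\pi)<b\}$, the region singled out by the hypothesis $0\le\mathbb{A}_c^{\pi_k}(\pi)<d-C(\pi_k)$. Given $\delta_\textup{B}>0$, the previous step provides a unique $x^\star\in(0,b)$ with $g\!\big(\tfrac{b-x^\star}{b}\big)=\delta_\textup{B}$; setting $w\coloneqq x^\star/b\in(0,1)$ one has $\min\{b,wb\}=wb=x^\star$, and therefore, for every $\pi\in S$,
\[
D_\textup{B}(\pi,\pi_k)\le\delta_\textup{B}\ \iff\ \mathbb{A}_c^{\pi_k}(\pi)\le\min\{b,wb\}.
\]
(Conversely, any $w\in(0,1)$ is obtained from $\delta_\textup{B}\coloneqq g(1-w)>0$.) Hence, on $S$, problem \eqref{eq:max-div} has the same feasible set, and so the same optimizers, as the linearised problem
\[
\max_{\pi\in\Pi}\ \mathbb{A}_r^{\pi_k}(\pi)\ \ \textrm{s.t.}\ \ \mathbb{A}_c^{\pi_k}(\pi)\le\min\{b,wb\}\ \ \textrm{and}\ \ \bar D_\textup{KL}(\pi,\pi_k)\le\delta_\textup{KL}.
\]

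\textbf{Exactness for both pairs and chaining.} The Exactness theorem above already gives, for $\kappa\ge|\lambda|$, that \eqref{eq:max-div} and \eqref{eq:exact-penalty-formulation} have the same solution set. The linearised problem is again a concave programme with the strictly feasible point $\pi_k$, so \Cref{app:thm:exactness} applies to it as well and identifies its solution set with that of \eqref{eq:exact-penalty-reformulation} once $\kappa$ exceeds the magnitude $|\lambda'|$ of its multiplier — and $|\lambda'|$ differs from $|\lambda|$ only by the positive factor $g'\!\big(\tfrac{b-x^\star}{b}\big)\cdot(-1/b)=\tfrac{w}{(1-w)b}$ coming from differentiating $D_\textup{B}$ at the active boundary, so nothing new has to be estimated. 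Combining the two exactness statements with the identification above, for any $\kappa\ge\max\{|\lambda|,|\lambda'|\}$ the three problems \eqref{eq:max-div}, \eqref{eq:exact-penalty-formulation} and \eqref{eq:exact-penalty-reformulation} share the same set of optimizers on $S$.

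\textbf{Where the difficulty lies.} The hard part is not any of the above but rather being precise about the restriction to $S$: for $\mathbb{A}_c^{\pi_k}(\pi)<0$ the barrier sublevel set $\{D_\textup{B}(\pi,\pi_k)\le\delta_\textup{B}\}$ is bounded below in the cost advantage (since $g$ is large also on $(1,\infty)$), whereas $\{\mathbb{A}_c^{\pi_k}(\pi)\le\min\{b,wb\}\}$ is an entire half–space, so the feasible sets genuinely diverge there. The argument must therefore keep the comparison — and in particular the optimizers of the two penalised problems — inside $S$, e.g.\ by carrying the extra constraint $\mathbb{A}_c^{\pi_k}(\pi)\ge0$ through all three problems, or by arguing that the relevant optimizers cannot have negative cost advantage; deciding which of these readings is intended is the only genuinely delicate point, the rest being bookkeeping. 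A trivial but easy-to-miss check is that $\min\{b,wb\}=wb$ precisely because $0<w<1$, so the linear offset absorbs the $\delta_\textup{B}$-slack exactly.
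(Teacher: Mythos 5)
Your proposal is correct and follows essentially the same route as the paper: both identify the barrier sublevel set with a linear half-space on the region $\mathbb{A}_c^{\pi_k}(\pi)\ge 0$ via the strict monotonicity of $D_\textup{B}$ in the cost advantage, and then invoke the exact-penalty theorem to transfer the equivalence to the penalized problems. The only cosmetic difference is that the paper solves for the threshold explicitly, obtaining $w = W(-\exp(-\delta_\textup{B}-1))+1$ via Lambert's W-function, whereas you argue existence of $x^\star$ (hence $w$) by monotonicity alone, which suffices for the existential claim; your remarks on the restriction to nonnegative cost advantage match how the paper bakes $\mathbb{A}_c^{\pi_k}(\pi)\ge 0$ into both feasible sets.
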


The new update expresses the same constraint using a linear ReLU-penalty. The rate $w$ is a new hyper-parameter and we refer to Appendix \ref{app:derivation} for a proof of \Cref{prop:barrier-vs-lin}. 
Since the original problem's penalty function is not defined outside the interior of the feasible set, we must handle the case $C(\pi_k)\geq d$ differently, which is taken care of by the $\min(b, \cdot)$ term: For $b<0$, problem \ref{eq:exact-penalty-reformulation} reduces to
the P3O~\citep{zhang2022penalizedproximalpolicyoptimization} objective Eq. \ref{eq:p3o-penalty}.
Finally, the additional KL-constraint is approximated as in PPO~\citep{schulman2017proximalpolicyoptimizationalgorithms}. 
The resulting loss only consists of the PPO loss and an additional loss term which is a function of the policy cost advantage estimate. 
Let $r(\theta)=\frac{\pi_\theta(a|s)}{\pi_k(a|s)}$ denote the likelihood ratio of the optimized and last behavior policies and let
\begin{equation}\alpha_\text{clipped}(\theta) = \mathbb{E}_{s,a\sim \rho_k} \left[ \max\left(r(\theta) \hat{A}_{c}(s,a), \text{clip}(r(\theta), 1 - \epsilon, 1 + \epsilon) \hat{A}_{c}(s,a) \right)\right].\end{equation}
The C3PO loss is
\begin{equation}
\begin{aligned}\label{eq:c3po-loss}
    L^{\text{C3PO}}(\theta) = \operatorname{ReLU}\left(\alpha_\text{clipped}(\theta)-\min\{b, w\cdot b\}\right).
\end{aligned}
\end{equation}

The penalty coefficient remains a hyperparameter, which can be flexibly scheduled to solve CMDPs in the safe convergence regime, as shown in Section \ref{sec:emperiments}, where we use a linear schedule to achieve high final performance across multiple tasks. 
The final method is summarized in Algorithm \ref{alg:c3po}.

\begin{algorithm}[h!]
\caption{C3PO (deviation from PPO {\color{cbs1} in green})}\label{alg:c3po}
\begin{algorithmic}[1]
\Require Initial policy $\pi_0$ and value functions $\hat V_{r},{\color{cbs1}\hat V_{c_i}}${\color{cbs1}, thresholds $d_i$, scheduled penalty $\kappa_k$, rate $w$}
\For{$k = 0, 1, 2, \ldots$}
    \State Collect trajectory data $\mathcal{D} = \{s_0, a_0, r_0, c_0, \ldots\}$ by running $\pi_k$
    \State Estimate reward advantage $\hat{A}^r_t$ and {\color{cbs1}cost advantages $\hat{A}^{c_i}_t$} using GAE-$\lambda$ \citep{schulman2018highdimensionalcontinuouscontrolusing}
    \State Update policy $\pi_{k+1}$ by minimizing $L^{\text{PPO}} {\color{cbs1} + \kappa_kL^{\text{C3PO}}}$ (\Eqref{eq:c3po-loss})
    \State Update value function estimates $\hat V_r^{\pi_{k+1}}$ and $\hat V_{c_i}^{\pi_{k+1}}$ by regression
\EndFor
\end{algorithmic}
\end{algorithm}

\paragraph{Relation to other PPO-Penalty methods}
C3PO is a superset of P3O~\citep{zhang2020first}. 
More precisely, if we set $w=1$ in C3PO, we obtain the P3O loss exactly. 
Further, C3PO is conceptually similar to P2BPO~\citep{dey2024p2bpo}, in using a more conservative version of the P3O loss, but C3PO does not use a penalty with a fixed location at the constraint, but a moving penalty which recedes as the iterate gets closer to the constraint. This allows C3PO to approach the optimal feasible solution without regularization bias.

\section{Computational Experiments}\label{sec:emperiments}
To evaluate our approach, we conduct experiments aimed at testing the benefits of using central path approximation as a design principle for constrained policy optimization algorithms.\footnote{Code: \url{https://github.com/milosen/c3po}}
We benchmark C3PO against a range of representative constrained reinforcement learning baselines. 
We include methods from three major algorithmic families: penalty-based methods (P3O, P2BPO), Lagrangian methods (PPO-Lag, CPPO-PID), and trust-region methods (CPO, C-TRPO).

Conceptually, \emph{penalty-based methods}, especially algorithms that augment the PPO loss with a penalty, like P3O~\citep{zhang2022penalizedproximalpolicyoptimization} and P2BPO~\citep{dey2024p2bpo}, are closest to our approach. 
Like C3PO, those penalize constraint violations directly in the policy gradient loss using a ReLU-penalty.
\emph{Lagrangian methods} maintain dual variables to enforce constraints adaptively. 
PPO-Lagrangian~\citep{ray2019benchmarking} applies this principle to the PPO algorithm, forming a loss which is similar to C3PO's. 
For completeness, we consider CPPO-PID~\citep{stooke2020responsivesafetyreinforcementlearning} as a more recent Lagrangian baseline.
Finally, \emph{trust region methods}, such as CPO~\citep{achiam2017constrained} and C-TRPO~\citep{anonymous2025embedding}, use trust regions and constrained updates to maintain stable reward improvement and feasibility throughout training.
They do not aim for scalability, but form strong baselines on the benchmarks.

We benchmark the algorithms on 4 locomotion tasks and 4 navigation tasks from Safety Gymnasium \citep{ji2023safety}, as done by \cite{anonymous2025embedding}.
For the baseline algorithms, we use the hyper-parameters reported by \cite{ji2023safety}, and for P3O and C-TRPO the recommended parameters in \cite{zhang2022penalizedproximalpolicyoptimization} and \cite{anonymous2025embedding} respectively. 
For C3PO we use $\kappa=30.0$ and $w=0.05$. Each algorithm is trained on each task for 10 million steps with a cost threshold of 25.0. Final iterate performance is measured by aggregating over 5 seeds using \texttt{rliable}~\citep{agarwal2021deep}.
\begin{wrapfigure}{rt}{0.6\textwidth}
  \begin{center}
\includegraphics[width=0.58\textwidth]{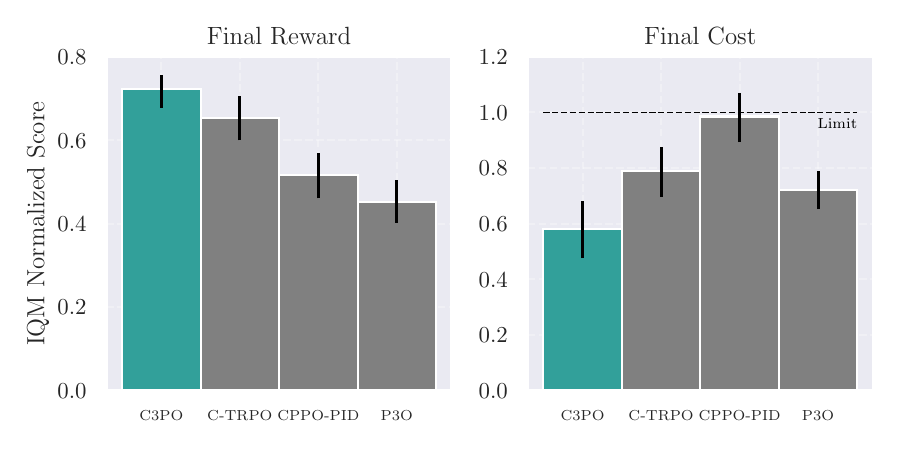}
  \end{center}
  \caption{Aggregated performance using the inter quartile mean (IQM) across 8 tasks from Safety Gymansium for a subset of algorithms. The algorithms were chosen as the feasible representatives of their respective group.}
  \label{fig:benchmark-small}
  \vspace{-10pt}
\end{wrapfigure}

The results provide confirmatory evidence for the usefulness of the central path approach. Policies trained with C3PO exhibit a stable progression toward the constrained optimum, maintaining feasibility for most training iterations, see Figure \ref{fig:lag-failure-racecar}.
Furthermore, C3PO consistently outperforms prior PPO-style penalty methods in terms of achieved reward, while also adhering more strictly to the specified constraints, see Figure \ref{fig:benchmark-small}.
This improved trade-off between reward and feasibility offers additional support for the effectiveness of the central path approach.
While C3PO does not outperform trust-region methods across all tasks in the benchmark, it performs well consistently, resulting in high aggregated performance. 
The full benchmark results table and more examples like Figure \ref{fig:lag-failure-racecar} are presented in Appendix~\ref{app:experiments}.
\begin{figure}[t]
    \centering
    \includegraphics[width=\linewidth]{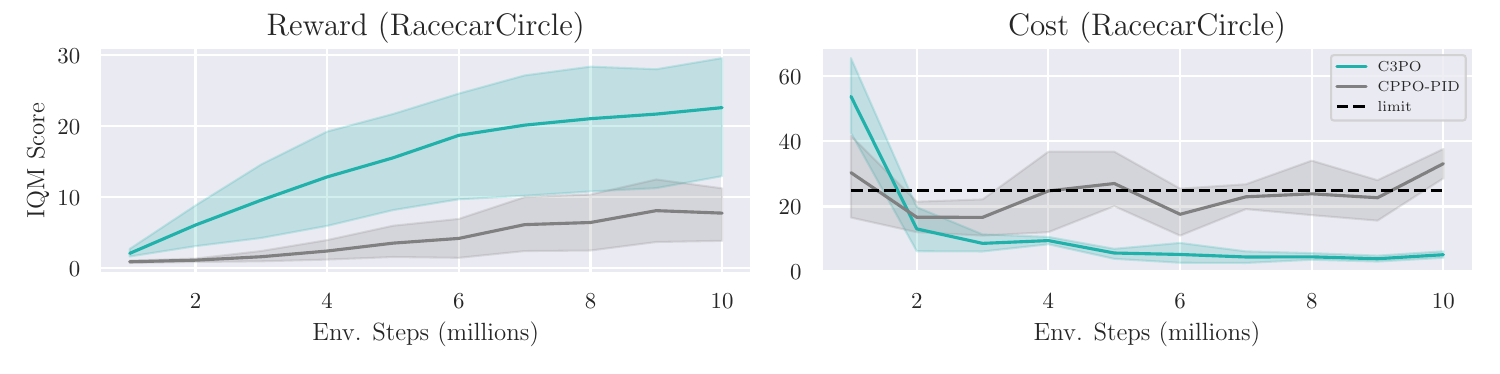}
    \caption{Example of improved performance through approximating the central path: Lagrangian methods tend to converge less reliably towards a safe policy and oscillate around the constraint. This does not yield a higher final reward.
    Instead, staying feasible from early on in training seems to have a positive effect on final reward.}
    \label{fig:lag-failure-racecar}
\end{figure}

\section{Conclusion}
In this work, we use central path approximation as a guiding principle for designing policy optimization methods for constrained RL. 
We propose C3PO, an algorithm which is obtained through a simple augmentation of the original PPO-loss inspired by the central path approach.
Our experimental results support this design principle: 
Compared to existing PPO-style penalty and Lagrangian methods, C3PO exhibits improved performance with tighter constraint satisfaction, highlighting the benefits of a central path approach in constrained policy optimization.

While the current results are limited to small-scale simulations and simplified settings, such as a single constraint per task, they suggest that central path approximation is a promising design principle for constrained RL algorithms. 
We hope this encourages further research, following this paradigm. 
Future directions include the extensions to high-dimensional tasks, 
theoretical guarantees,  
and applications such as safety-critical control and LLM fine-tuning. 

\begin{ack}
N. M. and N.S. are supported by BMBF (Federal Ministry of Education and Research) through ACONITE (01IS22065) and the Center for Scalable Data Analytics and Artificial Intelligence (ScaDS.AI.) Leipzig and by the European Union and the Free State of Saxony through BIOWIN. N.M. is also supported by the Max Planck IMPRS CoNI Doctoral Program.
\end{ack}

\bibliographystyle{plainnat}
\bibliography{exait_2025}


\appendix

\section{Extended Background}\label{app:background}
\subsection{The Geometry of Policy Optimization}
\cite{neu2017unified} have shown that the policy divergence used to define the trust-region in TRPO~\cite{schulman2017trust} can be derived as the Bregman divergence generated by a mirror function on the state-action polytope. 
TRPO's mirror function is the negative conditional entropy
\begin{equation}
    \Phi_K(\rho) = \sum_{s,a}\rho(s,a)\log\pi_\rho(a|s)
\end{equation}
which generates
\begin{equation}
    D_K(\pi_k||\pi) = \sum_{s,a}\rho_k(s,a)[\log\pi(a|s)-\log\pi_k(a|s)]
\end{equation}
via the operator
\begin{align}
    D_\Phi(x||y) \coloneqq \Phi(x) - \Phi(y) - \nabla \Phi(y)^\top(x-y).
\end{align}

In general, a trust region update is defined as
\begin{align}\label{app:eq:TRPO}
    \pi_{k+1} \in \argmax_{\pi \in \Pi %
    } 
    \mathbb{A}_{r}^{\pi_k}(\pi) 
    \quad \text{ sbj. to } 
    D_\Phi(\rho_{\pi_k}||\rho_{\pi}) \le \delta
    ,
\end{align}
where $D_\Phi\colon\mathcal{K}\times\mathcal{K}\to\mathbb R$ is the \emph{Bregman divergence} induced by a suitably convex function $\Phi\colon \operatorname{int}(\mathcal{K}) \rightarrow \mathbb{R}$.

\subsection{The Safe Geometry Approach}
\cite{anonymous2025embedding} consider mirror functions of the form
\begin{align}
    \Phi_{\operatorname{C}}(\rho) &\coloneqq\Phi_\textup{K}(\rho) + \sum_i \beta_i\Phi_\textup{B}(\rho)\\
                               &\coloneqq\sum_{s,a}\rho(s,a)\log\pi_\rho(a|s)  + \sum_{i=1}^m \beta_i \phi\left(b_i-\sum_{s,a}\rho(s,a)c(s,a)\right), 
\end{align}
where $\rho\in\mathcal{K}_\textup{safe}$ is a feasible state-action occupancy, $\Phi_{\operatorname{K}}$ is the negative conditional entropy, and $\phi$ is convex. 
Further, $\phi\colon\mathbb R_{>0}\to\mathbb R$  with $\phi'(x)\to+\infty$ for $x\searrow0$. 
The log-barrier $\phi(x)=-\log(x)$ considered in this work is a possible candidate.
In general, the induced divergence takes the form
\begin{align}\label{eq:weighted-kl}
        D_{\C}(\rho_{1}||\rho_{2}) 
    &= D_\textup{K}(\rho_{1}||\rho_2)
        + \sum_{i=1}^m \beta_i D_\textup{B}(\rho_{1}||\rho_{2})\\
    &= D_{\operatorname{K}}(\rho_{1}||\rho_2)
        + \sum_{i=1}^m \beta_i [\phi(b_{1;i}) - \phi(b_{2;i})
     + \phi'(b_{2;i})C_i(\pi_1) - \phi'(b_{2;i})C_i(\pi_2))],
\end{align}
where $b_{\pi;i}=d_i - C_i(\pi)$. 
The corresponding trust-region scheme is
\begin{align}\label{eq:Safe-TRPO}
    \pi_{k+1} \in \arg \max_{\pi \in \Pi
    } \mathbb{A}_{r}^{\pi_k}(\pi)
    \quad \text{ sbj. to } 
    D_\C(\rho_{\pi_k}||\rho_{\pi}) \le \delta.
\end{align} 
Analogously to the case of unconstrained TRPO, there is a corresponding natural policy gradient scheme:
\begin{align}\label{eq:NPG-safe}
    \theta_{k+1} = \theta_k + \epsilon_k G_\C(\theta_k)^+ \nabla R(\theta_k),
\end{align}
where $G_\C(\theta)^+$ denotes an arbitrary pseudo-inverse of the Gramian
\begin{align*}
    G_\C(\theta)_{ij} = \partial_{\theta_i}\rho_\theta^\top \nabla^2 \Phi_\C(\rho_\theta) \partial_{\theta_j} \rho_\theta. 
\end{align*}

The authors discuss that, under suitable parametrizations of $\theta \mapsto \pi$, this gradient preconditioner is a Riemannian metric on $\Theta_\textup{safe}$ and natural policy gradient flows based on $G_\C(\theta_k)$ leave $\Theta_\textup{safe}$ invariant. Further, $G_\C(\theta_k)^+$ is equivalent to the Hessian of $D_\C$:

\begin{align*}
    H_\C(\theta) & = \mathbb{E}_{s\sim \rho_\theta}F(\theta) + \sum_i \beta_i \phi''(b_i-C_i(\theta)) \nabla_\theta^2 C_i(\theta) \Big|_{\theta = \theta_k}.
\end{align*}

where $F$ is the fisher information of the policy.
Unlike in TRPO, the divergence itself is not easy to estimate, however, the authors demonstrate that another divergence has the same Hessian, i.e. is equivalent up to second order in the policy parameters. It is derived using a ``surrogate advantage trick'' for $C_i$ and results in the divergence
\begin{align}
    \bar D_{KL}(\pi,\pi_k) + \beta\bar D_\Phi(\pi,\pi_k) = \bar D_{KL}(\pi,\pi_k) + \beta \cdot [\phi(b_k-\mathbb{A}_{c}^{\pi_k}(\pi)) - \phi(b_k) - \phi'(b_k)\cdot\mathbb{A}_{c}^{\pi_k}(\pi)],
\end{align}
which is ultimately used as a drop-in replacement for the conventional divergence in TRPO.

\subsection{Central Paths}
In the small step size limit, the trajectories induced by trust region methods converge to the corresponding natural policy gradient (NPG) flow on the state-action polytope $\mathcal{K}$. 
The space of state-action occupancies $\rho\in\mathcal{K}$ forms not only a polytope, but a Hessian manifold~\cite{muller2023geometry}. 
C-TRPO induces such a gradient flow on the LP Equation \ref{eq:CMDP-lin} w.r.t the Hessian geometry induced by the convex function 
\begin{equation}
\Phi(\rho) = \sum_{s,a}\rho(s,a)\log \pi(a|s) - \beta\log(\rho - \sum_{s,a}\rho(s,a)c(s,a)). 
\end{equation}
It is well known that Hessian gradient flows $(\rho_t)$ of linear programs follow the central path, meaning that they are characterized as the optimizers of regularized linear programs with regularization strength $t^{-1}$. 
In policy space, we obtain for a single constraint
\begin{equation}
    \pi_t = \argmax\{ R(\pi) + t^{-1} D_{\Phi}(\pi, \pi_0) : \pi, C(\pi) \le d \}.
\end{equation}
Since $\Phi$ curves infinitely towards the boundary of the feasible set of LP Equation \ref{eq:CMDP-lin}, solving the problem posed by C-TRPO corresponds to solving LP Equation \ref{eq:CMDP-lin} using an interior point / barrier method with barrier $D_{\Phi}(\cdot, \pi_0)$.
For a more detailed discussion of Hessian geometries and natural policy gradients see \cite{alvarez2004hessian, muller2023geometry, muller2024essentially}. 

\section{Proofs of Section 3}\label{app:derivation}

\subsection{Exact Penalty Methods}

We provide a general result for the exactness of the penalties considered in this work. For general discussions of exact penalty methods, we refer to standard textbooks in optimization~\cite{bertsekas1997nonlinear, nocedal1999numerical}. 
Here, we consider a compact subset $X\subset\mathbb{R}^n$ with non-empty interior, differentiable functions $f,g\in C^1(X)$, and the constrained optimization problem
\begin{align}\label{app:eq:constrained}
    \max f(x) \quad \text{subject to } g(x)\le b,
\end{align}
where we impose Slater's condition $\{ x\in X : g(x) <  b \}\ne\varnothing$ to be non-empty and $f$ to be concave and $g$ to be convex. 
We denote the penalized functions by 
\begin{align}
    P_\kappa(x) \coloneqq f(x) - \kappa\max\{0, g(x)-b\}. 
\end{align}
Recall the definition of the Lagrangian 
\begin{align}
    \mathcal L(x,\lambda)=f(x)-\lambda(g(x)-b). 
\end{align}

\begin{theorem}[Exactness for convex programs]\label{app:thm:exactness}
Assume that there exists a solution $x^\star\in X$ of \eqref{app:eq:constrained} and denote the corresponding dual variable by $\lambda^\star\ge0$. 
For $\kappa>\lambda^\star$ we have  
    \begin{align}
        \arg\max \{f(x) : x\in X, g(x)\le b \} = \arg\max \{P_\kappa(x) : x\in X\}.
    \end{align}
\end{theorem}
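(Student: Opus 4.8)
The plan is to route everything through a single saddle-point inequality coming from strong duality, and then carry out two elementary inclusion arguments. The only place where concavity of $f$, convexity of $g$, compactness of $X$, and Slater's condition enter is to assert strong duality for \eqref{app:eq:constrained}: the primal optimum $f(x^\star)$ is attained and, for the optimal multiplier $\lambda^\star\ge0$, the pair $(x^\star,\lambda^\star)$ is a saddle point of $\mathcal L$. From this I would extract two facts: complementary slackness $\lambda^\star(g(x^\star)-b)=0$, so that $\mathcal L(x^\star,\lambda^\star)=f(x^\star)$, and global maximality of the Lagrangian,
\begin{equation}\label{eq:exactplan-saddle}
f(x)-\lambda^\star(g(x)-b)\ \le\ f(x^\star)\qquad\text{for all }x\in X.
\end{equation}

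For the inclusion $\arg\max\{f(x):x\in X,\ g(x)\le b\}\subseteq\arg\max\{P_\kappa(x):x\in X\}$, I would first evaluate $P_\kappa$ on the two regimes. On the feasible set $\max\{0,g(x)-b\}=0$, so $P_\kappa(x)=f(x)\le f(x^\star)$. For $x$ with $g(x)>b$ I would split the penalty and invoke \eqref{eq:exactplan-saddle}:
\begin{align}
P_\kappa(x)&=\bigl(f(x)-\lambda^\star(g(x)-b)\bigr)-(\kappa-\lambda^\star)(g(x)-b)\\
&\le f(x^\star)-(\kappa-\lambda^\star)(g(x)-b)\ <\ f(x^\star),
\end{align}
the last inequality being strict because $\kappa>\lambda^\star\ge0$ and $g(x)-b>0$. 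Hence $\max_{x\in X}P_\kappa(x)=f(x^\star)$, this value is attained only at feasible points, and every constrained optimum $\bar x$ (for which $f(\bar x)=f(x^\star)$ and $g(\bar x)\le b$) satisfies $P_\kappa(\bar x)=f(\bar x)=f(x^\star)$, i.e. $\bar x\in\arg\max P_\kappa$.

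For the reverse inclusion, I would take any $\hat x\in\arg\max\{P_\kappa(x):x\in X\}$ — such a maximizer exists because $P_\kappa$ is continuous ($t\mapsto\max\{0,t\}$ is continuous and $g\in C^1$) and $X$ is compact. By the previous paragraph the maximal value $f(x^\star)$ is attained only at feasible points, so $g(\hat x)\le b$ and thus $P_\kappa(\hat x)=f(\hat x)$; combined with $P_\kappa(\hat x)=f(x^\star)$ this yields $f(\hat x)=f(x^\star)$ with $\hat x$ feasible, i.e. $\hat x$ solves \eqref{app:eq:constrained}. The two inclusions together give the claimed equality of argmax sets.

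The step I expect to be the only genuine obstacle is the first one: establishing the \emph{global} saddle-point inequality \eqref{eq:exactplan-saddle}, i.e. that $x^\star$ maximizes $\mathcal L(\cdot,\lambda^\star)$ over all of $X$ rather than being merely a KKT point. This is exactly where convexity and Slater's condition are essential — they rule out a duality gap — and I would either cite it from a standard optimization reference~\citep{bertsekas1997nonlinear, nocedal1999numerical} or derive it from the supporting-hyperplane property at $y=b$ of the concave value function $y\mapsto\max\{f(x):x\in X,\ g(x)\le y\}$. Everything after \eqref{eq:exactplan-saddle} is the two-line estimate above together with continuity and compactness for existence of the penalized maximizer.
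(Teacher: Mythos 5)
Your proposal is correct and follows essentially the same route as the paper's proof: both hinge on the fact that, by convexity and strong duality, $x^\star$ globally maximizes $\mathcal L(\cdot,\lambda^\star)$ over $X$, and then show that any infeasible point has $P_\kappa$-value strictly below $P_\kappa(x^\star)=f(x^\star)$ while $P_\kappa$ coincides with $f$ on the feasible set. Your splitting of the penalty as $\lambda^\star(g(x)-b)+(\kappa-\lambda^\star)(g(x)-b)$ is just a rephrasing of the paper's inequality $\kappa(g(\bar x)-b)>\lambda^\star(g(\bar x)-b)$, and your added remarks on complementary slackness and existence of the penalized maximizer only make the same argument slightly more explicit.
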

\begin{proof}
        Consider an infeasible point $\bar x\in X$ of $P_\kappa$, meaning that $g(\bar x)>b$. 
        Note by convexity $x^\star$ maximizes the Lagrangian $\mathcal L(\cdot, \lambda^\star)$. 
        Then
        \begin{equation*}
            P_\kappa(\bar x)
            =f(\bar x)-\kappa\,(g(\bar x)-b)
            < f(\bar x)-\lambda^\star(g(\bar x)-b)
            =\mathcal L(\bar x,\lambda^\star)
            \le\mathcal L(x^\star,\lambda^\star)
            =P_\kappa(x^\star). 
        \end{equation*}
        Hence, every maximizer of $P_\kappa$ is feasible and thus a solution of the regularized problem, showing the inclusion $\supseteq$.   
        As $P_\kappa$ agrees with $f$ for feasible points, we also obtain that every maximizer of $f$ over the feasible set is a maximizer of $P_\kappa$. 
        

\end{proof}

\subsection{C3PO Exact Penalty}\label{app:c3po-derivation}

\barrierVSlin*
\begin{proof}

Let
$$P_\textup{Barrier}\coloneqq\{\pi: D_\textup{B}(\pi,\pi_k)\leq\delta_B,\ 
 \mathbb{A}_c^{\pi_k}(\pi)\geq0\}$$
and
$$P_\textup{Lin}\coloneqq\{\pi:\mathbb{A}_c^{\pi_k}(\pi) -w\cdot b\leq0,\ 
 \mathbb{A}_c^{\pi_k}(\pi)\geq0\}.$$
 Note that 
 \begin{equation}
    D_\textup{B}(\pi,\pi_k) = \frac{b-\mathbb{A}_{c}^{\pi_k}(\pi)}{b} - \log\left(\frac{b-\mathbb{A}_{c}^{\pi_k}(\pi)}{b}\right) - 1
\end{equation}
is a strictly convex increasing function of $\mathbb{A}_c^{\pi_k}$ for $\mathbb{A}_c^{\pi_k}\geq0$ (see Figure \ref{fig:moving-barrier}).
This means that there exists a unique $\mathbb{A}_\textup{B}>0$ that solves
\begin{equation}\label{eq:div-equality}
 \frac{b-\mathbb{A}_\textup{B}}{b} - \log\left(\frac{b-\mathbb{A}_\textup{B}}{b}\right) -1 = \delta_\textup{B}
\end{equation}
and for $\mathbb{A}_\textup{B} \geq \mathbb{A}_c^{\pi_k}(\pi)>0$ it holds that $\delta_\textup{B} \geq D_\textup{B}(\pi,\pi_k)>0$.
To solve for $\mathbb{A}_\textup{B}$, we rewrite \ref{eq:div-equality} as
\begin{equation}
 \left(\frac{\mathbb{A}_\textup{B}-b}{b}\right)\exp\left(\frac{\mathbb{A}_\textup{B}-b}{b}\right) = -\exp(-\delta_\textup{B} - 1).
\end{equation}
and use the definition of Lambert's W-Function~\cite{corless1996lambert} to invert the left hand side as follows
\begin{equation}
    \frac{\mathbb{A}_\textup{B}-b}{b} = W(-\exp(-\delta_\textup{B} - 1)),
\end{equation}
where $W$ is the real part of the principle branch of the W-Function.
Finally, rearranging yields
\begin{equation}
    \mathbb{A}_\textup{B} = b\cdot(W(-\exp(-\delta_\textup{B} - 1))+1).
\end{equation}

Note that $b>\mathbb{A}_\textup{B}>0$ must still hold.
With this result,
\begin{align}
    P_\textup{Barrier}&=\{\pi: \mathbb{A}_c^{\pi_k}(\pi)-\mathbb{A}_\textup{B}<0,\ 
 \mathbb{A}_c^{\pi_k}(\pi)\geq0\},\\
 &=\{\pi: \mathbb{A}_c^{\pi_k}(\pi)-b\ (W(-\exp(-\delta_\textup{B} - 1))+1)<0,\ 
 \mathbb{A}_c^{\pi_k}(\pi)\geq0\},\\
 &=\{\pi: \mathbb{A}_c^{\pi_k}(\pi)-b\ w<0,\ 
 \mathbb{A}_c^{\pi_k}(\pi)\geq0\},
\end{align}
 showing that $P_\textup{Barrier}=P_\textup{Lin}$ for a unique $w$.

Further, since $\min(b,w\cdot b) = w\cdot b$ for $b>0$, the solution sets of
\begin{align}
\max_{\pi \in \Pi}\ \mathbb{A}_{r}^{\pi_k}(\pi) \ &\textrm{ s.t. }\ \mathbb{A}_c^{\pi_k}(\pi) - \mathbb{A}_\textup{B}<0
            \ &\textrm{ and }\ \bar D_\textup{KL}(\pi,\pi_k) <\delta_\textup{KL}\\
\max_{\pi \in \Pi}\ \mathbb{A}_{r}^{\pi_k}(\pi) \ &\textrm{ s.t. }\ D_\textup{B}(\pi,\pi_k)<\delta_B
            \ &\textrm{ and }\ \bar D_\textup{KL}(\pi,\pi_k) <\delta_\textup{KL}
\end{align}
agree for $\mathbb{A}_c^{\pi_k}(\pi)\geq0$ and $w=W(-\exp(-\delta_\textup{B} - 1))+1$.

Finally, by theorem \ref{app:thm:exactness}, they must also agree with the solutions of
\begin{align}
\max_{\pi \in \Pi}\ \mathbb{A}_{r}^{\pi_k}(\pi) &-\kappa_k \max\{0, \mathbb{A}_{c}^{\pi_k}(\pi) - \min(b,w\cdot b)\}\ &\textrm{ s.t. }\ \bar D_\textup{KL}(\pi,\pi_k)<\delta_\textup{KL}, \\
\max_{\pi \in \Pi}\ \mathbb{A}_{r}^{\pi_k}(\pi) &-\kappa_k \max\{0, D_\textup{B}(\pi,\pi_k)-\delta_B\}\ &\textrm{ s.t. }\ \bar D_\textup{KL}(\pi,\pi_k)<\delta_\textup{KL},
\end{align}
under the same conditions and for large enough $\kappa$.
\end{proof}

Note that the cost budget $b=d-C(\pi_k)$ is multiplied with a fixed function of $\delta_\textup{B}$. 
Hence, we can use $w$ as the hyper-parameter immediately instead of defining it through $\delta_\textup{B}$.


\clearpage
\section{Experiment Details}\label{app:experiments}
\begin{figure}
    \centering
    \includegraphics[width=1\linewidth]{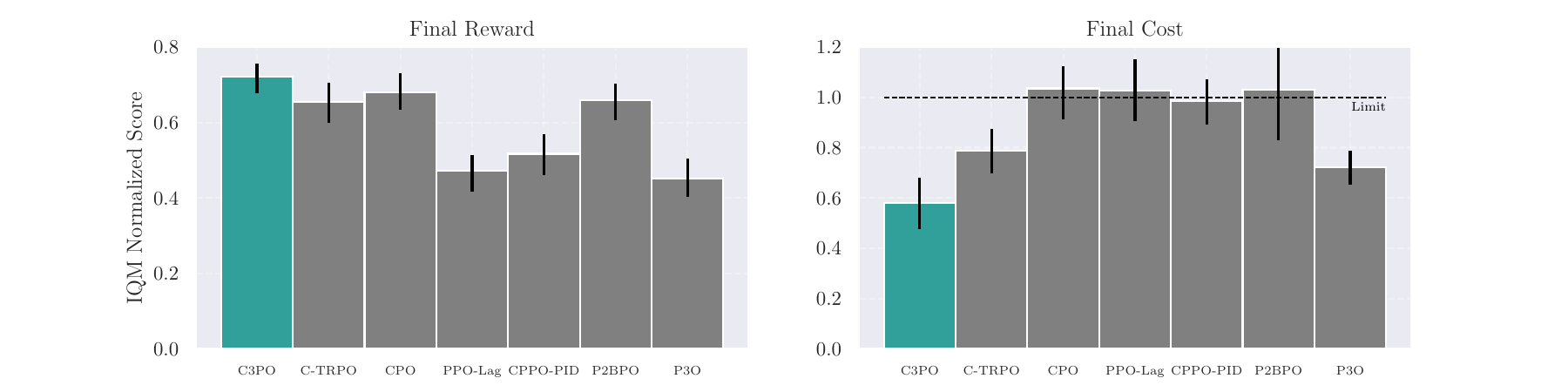}
    \caption{Aggregated performance using the inter quartile mean (IQM) across 8 tasks from Safety Gymansium for all algorithms (except P2BPO) across 8 tasks. P2BPO has been excluded, since the final cost (right) was off the charts. This may be due to the missing penalty coefficient in the algorithm.}
    \label{fig:enter-label}
\end{figure}
\begin{figure}[ht]
    \centering
    \includegraphics[width=\linewidth]{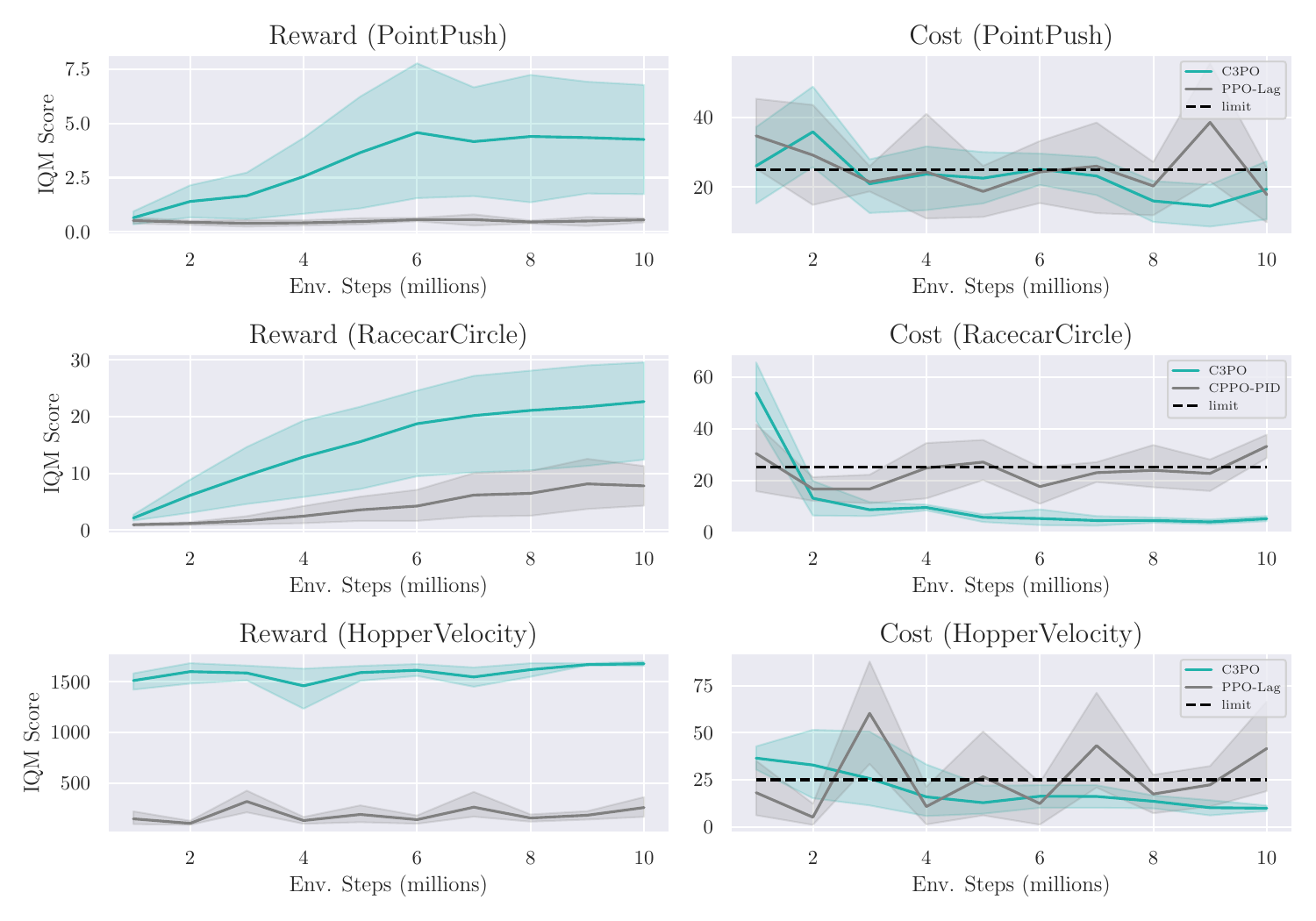}
    \caption{Hand-picked examples where central path approximation improves final reward performance.}
    \label{fig:lag-failure}
\end{figure}

\begin{table}
\caption{Performance of 8 representative safe policy optimization algorithms on 8 tasks from Safety Gymnasium for 10 million steps and a cost threshold of 25.0 aggregated over 5 seeds each. Bold marks the algorithm with the highest mean cumulative reward among the admissible ones. An algorithm is admissible, if its average cumulative cost achieved at the end of training is below the threshold.}
\tiny
\setlength\tabcolsep{2pt}
\centering
\begin{tabular}{llllllllll}
\toprule
 &  & Ant & HalfCheetah & Humanoid & Hopper & CarButton & PointGoal & RacecarCircle & PointPush \\
\midrule
\multirow[t]{2}{*}{C3PO} & $R$ & {3043 ± 44} & {2458 ± 436} & {5389 ± 93} & {\bfseries 1674 ± 35} & {2.3 ± 0.7} & {23.8 ± 0.9} & {25.9 ± 5.1} & {\bfseries 4.5 ± 2.6} \\
 & $C$ & 15.0 ± 4.7 & 13.3 ± 6.4 & 1.2 ± 0.9 & 9.9 ± 1.7 & 53.4 ± 22.3 & 37.9 ± 1.7 & 5.0 ± 1.7 & 20.2 ± 10.0 \\
\cline{1-10}
\multirow[t]{2}{*}{C-TRPO} & $R$ & {3019 ± 149} & {2841 ± 41} & {5746 ± 248} & {1621 ± 82} & {1.1 ± 0.2} & {\bfseries 19.3 ± 0.9} & {29.5 ± 3.1} & {1.0 ± 6.6} \\
 & $C$ & 13.2 ± 9.2 & 12.1 ± 7.6 & 12.2 ± 5.9 & 17.7 ± 8.0 & 34.0 ± 10.2 & 23.3 ± 3.6 & 20.2 ± 4.0 & 25.3 ± 7.0 \\
\cline{1-10}
\multirow[t]{2}{*}{CPO} & $R$ & {3106 ± 21} & {2824 ± 104} & {5569 ± 349} & {1696 ± 19} & {1.1 ± 0.2} & {20.4 ± 2.0} & {\bfseries 29.8 ± 1.9} & {0.7 ± 2.9} \\
 & $C$ & 25.1 ± 11.3 & 23.1 ± 8.0 & 16.2 ± 8.6 & 25.7 ± 4.4 & 33.5 ± 8.7 & 28.2 ± 4.1 & 23.1 ± 4.5 & 28.9 ± 20.0 \\
\cline{1-10}
\multirow[t]{2}{*}{PPO-LAG} & $R$ & {3210 ± 85} & {\bfseries 3033 ± 1} & {5814 ± 122} & {240 ± 159} & {0.3 ± 0.8} & {9.4 ± 1.8} & {30.9 ± 1.8} & {0.6 ± 0.0} \\
 & $C$ & 28.9 ± 8.7 & 23.2 ± 1.9 & 12.7 ± 31.0 & 38.8 ± 36.4 & 39.2 ± 41.1 & 22.5 ± 10.1 & 31.7 ± 2.7 & 18.2 ± 9.5 \\
\cline{1-10}
\multirow[t]{2}{*}{CPPO-PID} & $R$ & {3205 ± 76} & {3036 ± 10} & {\bfseries 5877 ± 84} & {1657 ± 61} & {\bfseries -1.2 ± 0.6} & {6.1 ± 4.8} & {8.1 ± 4.3} & {1.0 ± 1.1} \\
 & $C$ & 26.2 ± 4.4 & 26.5 ± 7.2 & 20.3 ± 6.0 & 18.6 ± 8.1 & 23.8 ± 6.0 & 21.8 ± 6.8 & 33.3 ± 5.9 & 22.8 ± 9.9 \\
\cline{1-10}
\multirow[t]{2}{*}{P2BPO} & $R$ & {3269 ± 18} & {2928 ± 46} & {5293 ± 171} & {1573 ± 85} & {6.1 ± 0.9} & {25.9 ± 0.2} & {15.7 ± 7.5} & {1.1 ± 0.5} \\
 & $C$ & 32.3 ± 8.9 & 26.0 ± 19.7 & 1.5 ± 1.1 & 13.2 ± 11.7 & 125 ± 14 & 39.6 ± 5.7 & 5.5 ± 8.0 & 43.8 ± 28.9 \\
\cline{1-10}
\multirow[t]{2}{*}{P3O} & $R$ & {\bfseries 3122 ± 24} & {3020 ± 12} & {5492 ± 118} & {1633 ± 49} & {0.2 ± 0.3} & {5.7 ± 0.3} & {0.9 ± 0.1} & {0.7 ± 0.6} \\
 & $C$ & 21.2 ± 2.5 & 27.0 ± 1.1 & 4.2 ± 2.2 & 14.6 ± 1.6 & 40.9 ± 18.2 & 17.1 ± 6.2 & 13.1 ± 4.6 & 14.1 ± 9.4 \\
\cline{1-10}
\multirow[t]{2}{*}{PPO} & $R$ & {5402 ± 274} & {6583 ± 954} & {6138 ± 699} & {1810 ± 390} & {18.2 ± 1.2} & {26.6 ± 0.2} & {40.8 ± 0.5} & {0.9 ± 0.7} \\
 & $C$ & 887 ± 27 & 976 ± 1 & 783 ± 60 & 435 ± 85 & 378 ± 18 & 50.7 ± 3.3 & 200 ± 4 & 42.9 ± 24.0 \\
\cline{1-10}
\bottomrule
\end{tabular}
\end{table}

\end{document}